\documentclass[10pt,twocolumn,letterpaper]{article}

\usepackage{wacv}              % To produce the CAMERA-READY version
\usepackage{graphicx}
\usepackage{amsmath}
\usepackage{amssymb}
\usepackage{booktabs}
\usepackage{amsfonts}       % blackboard math symbols
\usepackage{mathtools}
\usepackage{amsthm}
\usepackage{xcolor}

\usepackage{physics}
\usepackage{wrapfig}
\usepackage{pifont}
\usepackage{caption}
\usepackage{lipsum}
\usepackage{multirow}
\usepackage{algorithm}
\usepackage{color}
\usepackage[table]{xcolor}

\usepackage{amsmath,amsfonts,bm}

\def\eqref#1{equation~\ref{#1}}
\def\1{\bm{1}}

\def\rvh{{\mathbf{h}}}
\def\rvu{{\mathbf{i}}}

\def\rvu{{\mathbf{u}}}
\def\rvv{{\mathbf{v}}}
\def\rvw{{\mathbf{w}}}
\def\rvx{{\mathbf{x}}}

\def\rvz{{\mathbf{z}}}

\def\mP{{\bm{P}}}

\DeclareMathAlphabet{\mathsfit}{\encodingdefault}{\sfdefault}{m}{sl}
\SetMathAlphabet{\mathsfit}{bold}{\encodingdefault}{\sfdefault}{bx}{n}

\DeclareMathOperator*{\argmax}{arg\,max}

\theoremstyle{plain}
\newtheorem{theorem}{Theorem}[section]
\newtheorem{proposition}[theorem]{Proposition}

\theoremstyle{definition}

\theoremstyle{remark}

\definecolor{wacvblue}{rgb}{0.21,0.49,0.74}
\usepackage[pagebackref,breaklinks,colorlinks,allcolors=wacvblue]{hyperref}

\def\wacvPaperID{1191} % *** Enter the WACV Paper ID here
\def\confName{WACV}
\def\confYear{2026}

\title{Scalpel: Fine-Grained Alignment of Attention Activation Manifolds 
via Mixture Gaussian Bridges to Mitigate Multimodal Hallucination}

\author{Ziqiang Shi$^{\star}$\thanks{Corresponding author: shiziqiang@fujitsu.com}, Rujie Liu$^{\star}$, Shanshan Yu$^{\dagger}$, Satoshi Munakata$^{\dagger}$, Koichi Shirahata$^{\dagger}$\\
$^{\star}$ Fujitsu Research \& Development Center Co.,LTD., Beijing, China\\
$^{\dagger}$ Fujitsu Limited, Tokyo, Japan
}

\begin{document}
\maketitle

\begin{abstract}
Rapid progress in large vision-language models
(LVLMs) has achieved unprecedented performance
in vision-language tasks. However, due to the strong prior of large language models (LLMs) and misaligned attention across modalities,
LVLMs often generate outputs inconsistent with
visual content - termed hallucination. To address
this, we propose \textbf{Scalpel}, a method that reduces
hallucination by refining attention activation
distributions toward more credible regions. Scalpel
predicts trusted attention directions for each head
in Transformer layers during inference and adjusts
activations accordingly. It employs a Gaussian mixture
model to capture multi-peak distributions of
attention in trust and hallucination manifolds, and
uses  entropic optimal transport (equivalent to 
Schr{\"o}dinger bridge problem) to map Gaussian components
precisely.
During mitigation, Scalpel dynamically
adjusts intervention strength and direction based
on component membership and mapping relationships
between hallucination and trust activations.
Extensive experiments across multiple datasets and
benchmarks demonstrate that Scalpel effectively
mitigates hallucinations, outperforming previous
methods and achieving state-of-the-art performance.
Moreover, Scalpel is model- and data-agnostic,
requiring no additional computation, only a single
decoding step.
\end{abstract}

%%%%%%%%% BODY TEXT
\section{Introduction}
\label{sec:intro}

Large visual language models (LVLMs) have become essential tools for handling diverse  
visual tasks and performing complex visual question-answering due to their strong  
capabilities in content understanding and 
generation~\cite{liu2023visual,wang2024qwen2,zhu2023minigpt,yang2025mmada}. 
Despite these  
advancements, LVLMs often suffer from the ``hallucination'' problem, where generated  
text appears plausible but misaligns with image content or even fabricates elements  
not present in the input~\cite{liu2024mia,leng2024mitigating,sun2023aligning,leng2024mitigating,huang2024opera,yin2025clearsight,tu2025attention}. This issue significantly undermines their reliability in  
critical domains such as healthcare, autonomous driving, and security monitoring.  

\begin{figure*}[th]
\begin{center}
   \includegraphics[width=1.0\linewidth]{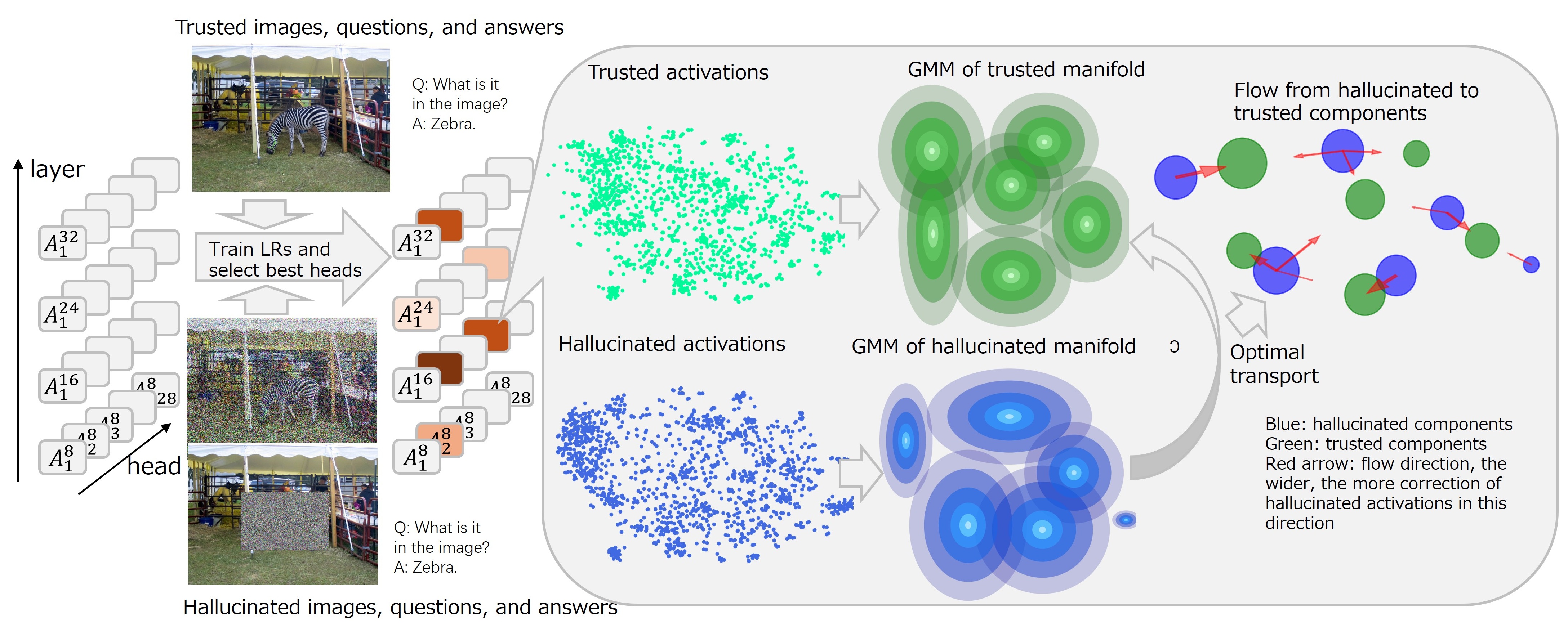}
\end{center}
   \caption{
Schematic diagram illustrating the principle of the Scalpel method. First, the trusted 
and hallucinated attention activations of all heads are obtained by inputting 
both correct and hallucinated data into the LVLM. These activations serve 
two purposes. The first is to identify the heads with the highest hallucination 
discrimination capability using a classifier. The second is to analyze the 
distribution of credible and hallucinated attention activations across tokens, 
thereby determining the influence of each individual token on each head.
   }
\label{fig:framework}
\end{figure*}

The hallucinations in current LVLMs are generally attributed to two main factors:  
the strong language priors inherited from pre-trained large language 
models (LLMs)~\cite{liu2024mia,leng2024mitigating},  
and the model's tendency to over-attend to irrelevant visual tokens or system tokens  
unrelated to the given instruction~\cite{sun2023aligning,wang2024eaco,yin2025clearsight,tu2025attention}. To mitigate this, researchers have explored  
reinforcement learning (RL) strategies that fine-tune LVLMs using 
high-quality~\cite{liu2024mia,lu2025dama,sun2023aligning,chen2025perturbollava},  
hallucination-free feedback—either human-labeled or AI-generated—to improve alignment  
between vision and language while reducing reliance on LLM priors. However, these  
methods demand substantial computational resources for annotation and training,  
making them impractical in resource-constrained environments. Consequently, recent  
studies have shifted focus toward inference-time optimization techniques that do  
not require retraining, such as contrastive 
decoding~\cite{leng2024mitigating,huang2024opera,favero2024multi,woo2024don}, 
which reduces hallucinations by  
adjusting logit scores. 
Although training-free, these methods still fall short in  
effectiveness and often introduce noticeable latency due to multi-step decoding.  

While both approaches yield some success, they rarely investigate the internal  
mechanisms of the LVLM itself to uncover the root causes of hallucinations. In contrast,  
this paper delves into the role of attention activation across different heads and  
layers within the LVLM's Transformer~\cite{vaswani2017attention} architecture during the generation of hallucinated  
content. We discover that individual attention heads contribute differently to  
hallucinations. By precisely identifying those responsible and applying targeted  
interventions, we can significantly enhance output quality and multimodal reasoning  
performance.  

Our key contributions are summarized as follows:  
\begin{itemize}
\item  We propose Scalpel, a novel, training-free, plug-and-play method that enables  
customized correction of attention activations at the token level without compromising  
the LLM's knowledge capacity, effectively reducing hallucinations in LVLMs.  
\item To achieve precise attention correction per token, Scalpel employs Gaussian 
mixture  
models (GMMs) to represent trusted and hallucinated attention activation manifolds, then  
applies Schr{\"o}dinger bridge problem solving 
theory to compute optimal transport between these GMMs as shown in 
Figure~\ref{fig:framework}.  
This allows for component-specific correction based on the current token's activation.  
\item Experimental results on POPE~\cite{li2023evaluating} and MME~\cite{fu2023mme}
benchmarks demonstrate that Scalpel significantly  
improves performance on LLaVA-1.5~\cite{liu2023visual} and 
Qwen2.5-VL~\cite{bai2025qwen2}, proving its 
model-agnostic and task-agnostic  
applicability across multiple databases.
\end{itemize}

\section{Related Work}
\label{sec:related_work}

\subsection{Hallucination in LVLMs}

Although LVLMs, built upon open-source language models such as 
LLaMA~\cite{touvron2023llama} 
and Vicuna~\cite{chiang2023vicuna}, have achieved effective multimodal fusion of 
text and images—greatly enhancing cross-modal 
understanding and generation—they still face a critical challenge: hallucination. 
Hallucination refers 
to factual inconsistencies between the generated text and the input image, 
often manifesting as fabricated 
objects or inaccurate descriptions of attributes and relationships. This issue 
primarily stems from two 
key factors: (1) an over-reliance on prior knowledge (e.g., biases in 
training data or inherent language 
priors within LLMs)~\cite{liu2024mia,leng2024mitigating}, and (2) limitations in 
visual encoder localization, misalignment of multimodal 
information~\cite{sun2023aligning,wang2024eaco}, or suboptimal attention modeling 
during decoding~\cite{yin2025clearsight,tu2025attention}, all of which hinder 
accurate associations 
between inputs and outputs.

\subsection{Hallucination Mitigation in LVLMs}

LVLM hallucination mitigation falls into two categories:  
training-based and training-free post-processing methods.  
Training-based approaches include:  
MIA-DPO~\cite{liu2024mia} creates multi-image data via grid collages from single inputs;  
LLaVA-RLHF~\cite{sun2023aligning} reduces reward hacking through fact-grounded reward models;  
DAMA~\cite{lu2025dama} dynamically optimizes training by aligning data difficulty with response strategies;  
PerturboLLAVA~\cite{chen2025perturbollava} weakens language prior reliance via adversarial text perturbations;  
EACO~\cite{wang2024eaco} achieves alignment with only 5,000 images (high efficiency);  
AMP~\cite{zhang2024automated} enhances fine-grained recognition through multi-level preference optimization.  

Training-free post-processing improves generation accuracy without retraining.
These methods fall into two categories:
contrastive decoding~\cite{leng2024mitigating,huang2024opera,favero2024multi,woo2024don}
and attention reallocation~\cite{yin2025clearsight,woo2024don,tu2025attention,zhang2024seeing}.
Contrastive decoding reduces bias via cross-image comparison (VCD~\cite{leng2024mitigating}),
uses penalties/rollback for refinement (OPERA~\cite{huang2024opera}),
and strengthens vision-language alignment (M3ID~\cite{favero2024multi}).
Attention reallocation improves responsiveness with blind-token calibration (AVISC~\cite{woo2024don}),
provides plug-and-play enhancement (ClearSight~\cite{yin2025clearsight}),
optimizes resource use through recycling (AttnReal~\cite{tu2025attention}),
and broadcasts attention matrices for focus enhancement (EAH~\cite{zhang2024seeing}),
achieving minimal computational overhead.

Our approach fundamentally differs by leveraging inference-time
intervention~\cite{li2023inference,chen2024ict}. The core procedure
comprises: (1) identifying factual-response directions in activation
space; (2) steering activation vectors toward these directions during
reasoning. Compared to existing methods, this technique offers:
negligible computational overhead; strong interpretability through
explicit activation manipulation; and significant hallucination
reduction while preserving generation quality.

\section{Methodology}
\label{sec:method}

\subsection{Background and Notation}
\label{sec:notation}
  
The input to an LVLM is multimodal, consisting of both text and visual components.  
For simplicity, we do not distinguish between system prompts and user instructions 
in text input.  
Let $\rvx^t = \{x_1, x_2, \dots, x_M\} $ denote the token sequence encoded 
by the text encoder,  
representing a textual feature sequence of length $M$. Similarly, let  
$ \rvx^v = \{x_{M+1}, x_{M+2}, \dots, x_{M+N}\} $ represent the visual 
token sequence encoded  
by the vision encoder, with length $N$. These sequences are concatenated into a unified  
input sequence $\rvx = \{x_1, x_2, \dots, x_{M+N}\} $ before being fed into 
the LVLM.  

The LVLM processes this  input token sequence $\rvx$ through $L$ Transformer layers. The output 
at layer $l+1$,  
denoted as $\rvh^{(l+1)}$, is computed using multi-head 
self-attention~\cite{vaswani2017attention}:  
\begin{equation}
\rvh^{(l+1)} = \rvh^{(l)} + \sum_{n=1}^{N_h} 
%\mQ_n^{l} 
\mathcal{A}_n^{l}( \rvh^{(l)})\mP_n^{l}
\end{equation}
where $N_h$ is the total number of heads in a Transformer layer,
$d$ is the dimension of the operation space of each head;
$\mathcal{A}_n^{l}(\cdot)$ is 
the attention operator of the $n$-th head in the 
$l$-th layer;
$\mP_n^{l}\in \mathbb{R}^{d \times dN_h}$ 
maps the activated output of attention  head
back to the operation space 
of the $l$-th Transformer layer.
Each layer performs self-attention to capture interactions between text 
and visual features. At the final layer (layer $L$), the hidden 
states $\rvh^{(L)}$ are passed through an affine layer 
to produce logits of size equal to the 
vocabulary, followed by Softmax normalization to 
generate the probability distribution for the next token $y_t$:  
$$
p(y_t | y_{<t}) = \text{Softmax}(\text{Affine}(\rvh_t^{(L)}))
$$
where $ y_{<t} $ represents the previously generated tokens.

\subsection{Modeling Trusted and Hallucinated Manifolds of Attention Activations}
\label{sec:manifolds}

Our method, Scalpel, is based on two key findings  
from studies on LLMs and LVLMs.  
First, these models often know the correct answer but  
fail to express it clearly~\cite{wang2020language,kadavath2022language,saunders2022self}.  
Second, their activation space has key directions that  
guide truthful responses~\cite{burns2022discovering,subramani2022extracting}.  
Earlier methods like ITI~\cite{li2023inference} and ICT~\cite{chen2024ict} reduce  
hallucinations by steering attention heads toward trusted  
activation spaces. Yet, they use fixed correction  
directions for all activations in a head, which may not be ideal.  
Scalpel improves this by customizing intervention  
directions for different activations during inference,  
even within the same attention head.  
To do this efficiently, we separately discretize and  
tokenize trusted and hallucinated manifolds, then  
create a one-to-one mapping between them.  
GMMs~\cite{reynolds1995robust} are well-suited for this task.  
We use GMMs to model attention activations from both  
trusted and hallucinated data, approximating their manifolds.

Trusted activation manifolds are built from valid 
(image, \textbf{Q}, \textbf{A}) triplets (e.g., Q: What is 
in the image? A: Zebra.), extracting attention 
activations across all layers/heads, as shown in the left part of Figure~\ref{fig:framework}. Hallucinated 
manifolds are created by perturbing images while 
keeping QA pairs correct, or by altering specific 
image regions (e.g., zebra bounding boxes). 
For each attention head/layer, we extract three 
manifolds: (1) trusted, (2) image-perturbed 
hallucinated, (3) object-perturbed hallucinated. 
Activation distributions differ slightly between
image and object levels, leading to different
intervention directions. At the object level,
as shown in Figure~\ref{fig:intervention}, most background remains
unchanged, so only specific components need
large-scale intervention. However, intervention
principles are identical, so we omit level
distinctions in the following.
Our goal is to map hallucinated to trusted manifolds, 
enabling corrective adjustments to activations.
Let \(\rho_H\) denote the hallucinated manifold's 
distribution with sample \(\rvz_0\), and \(\rho_T\) 
the trusted with \(\rvz_1\). The optimal mapping is 
given by the entropic optimal transport (EOT) problem~\cite{peyre2019computational}:
\begin{equation}  
\label{eq:eot}  
\min_{\pi \in \Pi(\rho_H, \rho_T)} \int \|\rvz_0 - \rvz_1\|^2 d\pi 
- \epsilon h(\pi)  
\end{equation}  
where \(\pi(\rvz_0, \rvz_1)\) is the transport plan 
(coupling), \(\Pi(\rho_H, \rho_T)\) the joint distributions 
with these marginals, and \(h\) the differential entropy:
\begin{equation}  
\label{eq:diff_entropy}  
h(\rho) \triangleq -\int \rho(\rvz) \log \rho(\rvz) d\rvz.  
\end{equation}

Directly solving the EOT problem (Eq.~\ref{eq:eot}) 
from hallucinatory to trusted activations 
is difficult. Leonard reformulates it as 
an equivalent Schr{\"o}dinger bridge problem 
(SBP) for tractability~\cite{leonard2014survey}: 
\begin{align} 
\label{eq:sbp}  
&\min_{\rvu \in \mathcal{U}} \mathbb{E}_{t \sim \rho_t} 
\left[ \int_0^1 \frac{1}{2\epsilon} \| \rvu_t(\rvz_t) \|^2 \, dt \right], \\  
&d\rvz_t = \rvu_t(\rvz_t) \, dt + \sqrt{\epsilon} \, d\rvw_t, \\  
&\rvz_0 \sim \rho_H, \quad \rvz_1 \sim \rho_T,  
\end{align}  
where \(\mathcal{U}\) is the set of adapted 
finite-energy controls (i.e., drift in 
diffusion/SDEs). The goal is minimal-energy 
control ensuring initial distribution \(\rho_H\) 
and terminal \(\rho_T\). We solve this SBP to 
derive the optimal path from hallucination 
to trusted manifolds.

\subsection{Optimal Transport Mapping Between Trusted and Hallucinated GMMs}
\label{sec:transport}

Both trusted/hallucinated GMMs contain multiple components. 
When an activation belongs to a hallucinated component, 
mapping requires not only identifying its trusted 
counterpart but determining the optimal transformation 
path. To address this, we propose an alignment algorithm 
superior to random matching. Intuitively, minimal 
corrections preserve data manifold integrity, 
thus we seek minimum-cost flow mapping between components, as depicted in the right part of Figure~\ref{fig:framework}.

This process is performed independently per attention head. 
Let hallucinated GMM (for a head) be:
\begin{equation}
\label{eq:gmm0}
\rvz_0 \sim \rho_H \approx \sum_{i=1}^{N_0} w^i_0 \mathcal{N}(\mu^i_0, \Sigma^i_0),
\end{equation}
and trusted GMM:
\begin{equation}
\label{eq:gmm1}
\rvz_1 \sim \rho_T \approx \sum_{j=1}^{N_1} w^j_1 \mathcal{N}(\mu^j_1, \Sigma^j_1),
\end{equation}
where \(\mu^i_0, \Sigma^i_0\) and \(\mu^j_1, \Sigma^j_1\) are 
mean/covariance parameters \(\forall i,j\). \(\rvz_0\), \(\rvz_1\) 
represent hallucinated/trusted activations.

Eq.~(\ref{eq:sbp}) becomes optimal mapping between GMMs:
\begin{equation}
\label{eq:sb_gmm_problem}
\min_{\rvu \in \mathcal{U}} \mathbb{E}
\left[ \int_0^1 \|\rvu_t(\rvz)\|^2 dt \right], \quad 
d\rvz = \rvu_t(\rvz) dt + \sqrt{\epsilon} d\rvw
\end{equation}
with \(\rvz_0\), \(\rvz_1\) satisfying Eq.~(\ref{eq:gmm0}) and~(\ref{eq:gmm1}).

\begin{figure*}[htbp]
  \centering
  \begin{subfigure}{1.0\textwidth}
      \centering
      \includegraphics[width=\textwidth]{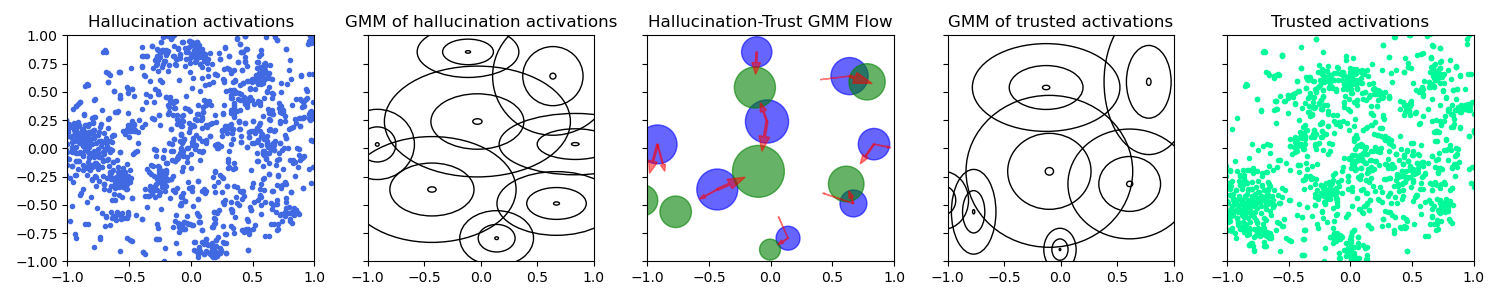}  % 插入第一张图片
      \caption{This comparison shows the distribution of trusted and 
      hallucinated attention activations at the image level. 
      The activations are extracted from the 73rd attention head in layer 1 of LLaVA-1.5. As seen in the middle image, most of the eight hallucination components require 
      noticeable interventions, varying in both direction and scale.}  % 第一张图片的标题
      \label{fig:intervention_image}
  \end{subfigure}
  \hfill
  \begin{subfigure}{1.0\textwidth}
      \centering
      \includegraphics[width=\textwidth]{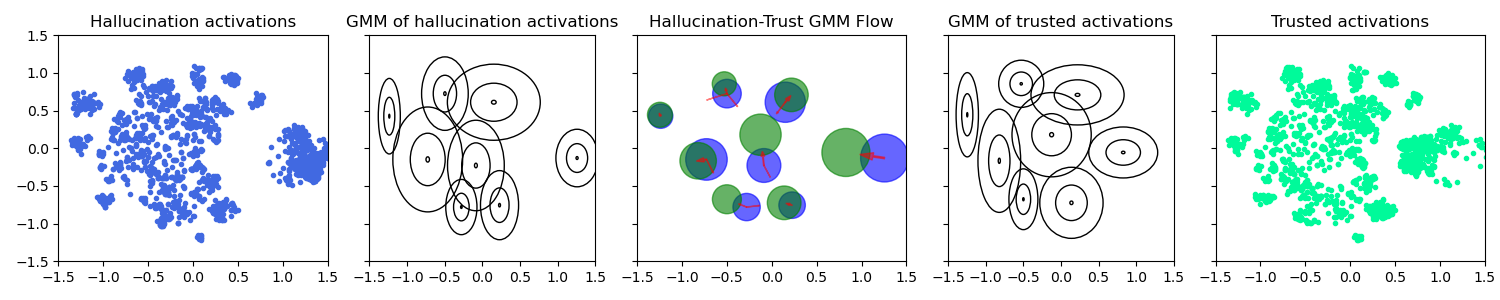}  % 插入第二张图片
      \caption{
This comparison shows distributions of trusted and
hallucinated object-level attention activations,
illustrating GMM transitions. Activations originate
from LLaVA-1.5's 12th layer, 62nd head. Of 8
hallucinated components, 6-7 require minimal intervention
while the rightmost needs substantial correction.
This occurs because hallucinations typically affect
single bounding boxes, leaving other image areas intact -
as shown in Figure~\ref{fig:framework} (zebra example).
}  % 第二张图片的标题
      \label{fig:intervention_object}
  \end{subfigure}
  \caption{Comparison of the distributions of trusted and 
  hallucinated attention activations at the image and object levels, 
  including the component composition of GMMs and the transition flow 
  from the  hallucinated GMMs to the trusted GMMs. For simplicity, we use 
  t-SNE~\cite{van2008visualizing} to map the original attention activations 
  into a two-dimensional space before performing all subsequent operations.}  % 总体图的标题
  \label{fig:intervention}
\end{figure*}

This formulation physically avoids excessive 
perturbation of LVLM attention activations, 
preserving visual-language alignment. Instead, 
it achieves hallucinated-to-trusted state 
shifts via minimal integral intervention.

The Schr\"{o}dinger bridge solves optimal 
transport between distributions. Bunne et al. 
\cite{bunne2023schrodinger} derived a closed-form 
Gaussian bridge solution for single Gaussians 
(i.e., \(N_0=N_1=1\) in Eq.~\ref{eq:sb_gmm_problem} 
constrained by Eq.~\ref{eq:gmm0},\ref{eq:gmm1}). 
This yields the most probable diffusion path 
under Brownian motion, with Gaussian marginals 
at all times. Rapakoulias et al. 
\cite{rapakoulias2024go} extended this to GMMs 
by combining Gaussian bridges via linear 
programming, ensuring theoretical feasibility 
and performance bounds.

Thus, we establish optimal hallucination-to-trust 
GMM mapping (Proposition 1), maximally 
preserving LVLM vision-language alignment.

\begin{proposition} \label{prop:sbp_gmm}  
(Optimal GMM bridge policy for hallucination-to-trust transition)  
For the GMM bridge problem in Eq.~(\ref{eq:sb_gmm_problem}),  
let $\rvu_{t|ij}$ be the optimal control policy for the  
Gaussian bridge between hallucinated component $i$  
($\mathcal{N}(\mu_0^i,\Sigma_0^i)$) and trusted component $j$  
($\mathcal{N}(\mu_1^j,\Sigma_1^j)$), with induced flow $\rho_{t|ij}$  
and optimal cost $J_{ij}$.  

Consider this linear program:  
$$
\min_{\lambda} \sum_{i,j} \lambda_{ij} J_{ij}
$$  
under optimal transport constraints with $\lambda_{ij}\geq 0, \forall i,j$:  
\begin{align*}  
\sum_{j=1}^{N_1} \lambda_{ij} &= w_0^i, & i &= 1,\dots,N_0, \\  
\sum_{i=1}^{N_0} \lambda_{ij} &= w_1^j, & j &= 1,\dots,N_1. 
\end{align*}

The solution $\lambda_{ij}^*$ gives the optimal lower  
bound for Eq.~(\ref{eq:sb_gmm_problem}). The optimal mixture policy:  
\begin{equation}  
\label{eq:optimal_policy}  
\rvu_t(\rvz) = \sum_{i,j} \rvu_{t|ij}(\rvz)  
\frac{\rho_{t|ij}(\rvz) \lambda_{ij}^*}{\sum_{r,\ell}  
\rho_{t|r\ell}(\rvz) \lambda_{r\ell}^*},  
\end{equation}  
is feasible for (\ref{eq:sb_gmm_problem}) with induced flow:  
$$
\rho_t(\rvz) = \sum_{i,j} \rho_{t|ij}(\rvz) \lambda_{ij}^*.
$$  
\end{proposition}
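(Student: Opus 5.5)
The argument follows Rapakoulias et al.~\cite{rapakoulias2024go} and splits into two independent claims: feasibility of the mixture policy with the stated induced flow, and the fact that the linear program value controls the cost of Eq.~(\ref{eq:sb_gmm_problem}). The main tool is the Fokker--Planck equation. Each Gaussian bridge of Bunne et al.~\cite{bunne2023schrodinger} satisfies
\begin{equation*}
\partial_t \rho_{t|ij} = -\nabla\cdot(\rho_{t|ij}\rvu_{t|ij}) + \tfrac{\epsilon}{2}\Delta\rho_{t|ij},
\end{equation*}
with boundary values $\rho_{0|ij}=\mathcal{N}(\mu_0^i,\Sigma_0^i)$ and $\rho_{1|ij}=\mathcal{N}(\mu_1^j,\Sigma_1^j)$. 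Both the drift and the diffusion terms in this equation are affine in the pair $(\rho,\rho\rvu)$, and this linearity drives the whole proof.

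\textbf{Feasibility.} Set $\rho_t=\sum_{ij}\lambda^*_{ij}\rho_{t|ij}$. Multiply each bridge equation by $\lambda^*_{ij}$ and sum to get
\begin{equation*}
\partial_t\rho_t = -\nabla\cdot\Big(\sum_{ij}\lambda^*_{ij}\rho_{t|ij}\rvu_{t|ij}\Big)+\tfrac{\epsilon}{2}\Delta\rho_t .
\end{equation*}
The policy in Eq.~(\ref{eq:optimal_policy}) is defined precisely so that $\rho_t\rvu_t=\sum_{ij}\lambda^*_{ij}\rho_{t|ij}\rvu_{t|ij}$. Hence $\rho_t$ solves the Fokker--Planck equation driven by $\rvu_t$.

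The marginal constraints of the linear program now fix the endpoints. At $t=0$ they give $\rho_0=\sum_i(\sum_j\lambda^*_{ij})\mathcal{N}(\mu_0^i,\Sigma_0^i)=\rho_H$ as in Eq.~(\ref{eq:gmm0}). At $t=1$ they give $\rho_1=\rho_T$ as in Eq.~(\ref{eq:gmm1}).

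To identify $\rho_t$ with the law of the controlled SDE, I still need well-posedness of the Fokker--Planck equation for this drift. The drift $\rvu_t$ is a softmax-weighted average of affine Gaussian-bridge drifts, with weights that are smooth and strictly positive. It is therefore locally Lipschitz with linear growth on $[0,1)$, so the SDE has a unique strong solution. Finite energy follows from the cost bound in the next step.

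\textbf{Cost bound.} By convexity of $(\rho,m)\mapsto \|m\|^2/\rho$ (Jensen on the weighted average),
\begin{equation*}
\rho_t\|\rvu_t\|^2\le\sum_{ij}\lambda^*_{ij}\rho_{t|ij}\|\rvu_{t|ij}\|^2 .
\end{equation*}
Integrating in space and time gives $\mathrm{cost}(\rvu)\le\sum_{ij}\lambda^*_{ij}J_{ij}$, so the linear program value is attained by a feasible policy. This makes it an upper bound on the optimum of Eq.~(\ref{eq:sb_gmm_problem}). It is the best such bound among all policies built by mixing component bridges, since every choice of admissible weights $\lambda$ gives a feasible policy in the same way and $\lambda^*$ minimizes the bound. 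I read the statement's ``optimal lower bound'' in this sense, i.e.\ optimal over the class of mixture policies.

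\textbf{Main obstacle.} The delicate step is the phrase ``optimal lower bound.'' The linear program value is \emph{not} a lower bound on the true GMM Schrödinger bridge cost in general: the true optimum may split components non-Gaussianly and achieve a smaller cost. I would first state precisely that $\sum\lambda^*_{ij}J_{ij}$ is optimal among mixture policies and an upper bound on the true optimum. If a genuine lower bound is wanted, I would then try to prove one via the static entropic optimal transport form Eq.~(\ref{eq:eot}) restricted to component-wise couplings. I expect this to fail without additional separation assumptions on the components, so I would fall back on the upper-bound interpretation.

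A secondary technical point is the behaviour near $t=1$. As $\epsilon\to0$ or at the endpoint, the bridge drifts may blow up. I would handle this with the standard argument that each Gaussian bridge has finite energy $J_{ij}$, which controls the mixture drift through the convexity bound above.
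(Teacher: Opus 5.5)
Your proof is correct. It is essentially the Rapakoulias et al.\ argument the paper cites, but it is considerably more complete than the paper's own sketch, which shares only its outer shell. The sketch invokes well-posedness of the Fokker--Planck equation and the Riccati/closed-form solution of each Gaussian bridge. It never states the steps that actually carry the result, all of which you supply:
\begin{itemize}
\item linear superposition of the component Fokker--Planck equations, with the drift chosen so that $\rho_t\rvu_t=\sum_{ij}\lambda^*_{ij}\rho_{t|ij}\rvu_{t|ij}$;
\item endpoint matching from the LP marginal constraints;
\item the perspective-function bound $\rho_t\|\rvu_t\|^2\le\sum_{ij}\lambda^*_{ij}\rho_{t|ij}\|\rvu_{t|ij}\|^2$.
\end{itemize}

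Your reading of ``optimal lower bound'' is also right. What can be proved is that $\sum_{ij}\lambda^*_{ij}J_{ij}$ is the smallest \emph{upper} bound on the optimum obtainable from mixtures of component bridges. You can settle this without the speculative static-EOT attempt. Gaussian densities have full support, so your Jensen inequality is strict on a set of positive measure as soon as two bridges with $\lambda^*_{ij}>0$ have different affine drift fields. The feasible mixture policy then costs strictly less than the LP value, so the LP value strictly exceeds the optimum and cannot be a lower bound outside degenerate cases. For the same reason, ``the LP value is attained by a feasible policy'' should read ``dominated by''.

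Two technical remarks. First, passing from strong well-posedness of the SDE to $\rho_t=\mathrm{Law}(\rvz_t)$ needs the superposition principle (Figalli, Trevisan) together with uniqueness in law. Your energy bound supplies its integrability hypothesis $\int_0^1\!\int\|\rvu_t\|\,d\rho_t\,dt<\infty$. Second, the blow-up you worry about at $t=1$ does not occur for $\epsilon>0$ and nondegenerate $\Sigma_0^i,\Sigma_1^j$. The bridge covariances stay uniformly positive definite on $[0,1]$, so the affine drift coefficients remain bounded. An $L^2$ energy bound would not rule out pointwise blow-up anyway, so that remark was not the right fix, but it is also not needed.
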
  

\emph{Proof sketch}.  
For system $d\rvz_t = \rvu(\rvz_t)dt + \sqrt{\epsilon}d\rvw_t$,  
the Fokker–Planck–Kolmogorov (FPK)~\cite{kolmogorov1931analytischen}  
equation uniquely determines $\rho_t$'s evolution and enables  
control policy derivation. The parabolic PDE guarantees  
existence/uniqueness of $\rho_t$, forming the basis for  
state-feedback control minimizing quadratic costs. Additive  
Gaussian noise and linear dynamics ensure tractability via  
Riccati equations, yielding closed-form optimal policies. $\qed$  

Per Proposition~\ref{prop:sbp_gmm}, the solution uses 
a mixture strategy weighting conditional policies by 
\(\lambda_{ij}^{*} \rho_{t|ij}(\rvz)\), normalized by 
\(\sum_{i,j} \rho_{t|ij}(\rvz)\lambda_{ij}^{*}\). 
Since \(\rho_{t|ij}(\rvz)\) is Gaussian centered at 
the \((i,j)\)-bridge mean at \(t\), this prioritizes 
strategies with means closer to \(\rvz\).
Applied independently per attention head, the 
Schr\"{o}dinger bridge framework maps hallucinated 
to trusted Gaussian components with minimal cost. 
Here \(\lambda_{ij}^{*}\) represents transport weight 
from hallucinated \(i\) to trusted \(j\). For each \(i\), 
select \(j^* = \arg\max_j \lambda_{ij}^{*}\) as the 
trusted counterpart for mitigation (used next section).
Figure~\ref{fig:intervention} shows optimal transfer examples from  
hallucination to trusted manifold. These cases,  
from different attention heads, demonstrate both  
image-level and object-level interventions.  
We observe that tailored interventions can be  
applied for distinct tokens, enabling greater  
customization.

\subsection{Hallucination Mitigation Based on Optimal Mapping Between Manifolds}
\label{sec:mitigation}

To identify critical attention heads, we train logistic  
regression probes over all $L$ layers and $N_h$ heads of  
the vision-language transformer. Given activation tensors  
$\mathcal{A} \in \mathbb{R}^{B \times L \times N_h \times d}$  
(batch size $B$, dimension $d$), we train a logistic model  
on each head's activations to distinguish hallucinated  
(1) vs factual (0) inputs using cross-entropy loss with  
$L_2$-regularized weights. The accuracy matrix $\mathcal{M}  
\in \mathbb{R}^{L \times N_h}$ reveals top-$k$ heads for  
intervention analysis (Figure~\ref{fig:framework}), where  
dark orange highlights indicate strong hallucination  
discrimination capacity.

The mitigation strategy dynamically adjusts key  
attention heads during answer generation to  
influence outputs. At each step, we extract the  
current token's high-dimensional attention vector,  
reflecting the model's internal state critical for  
next-word prediction in autoregressive models.  
Using pre-trained hallucinated and trusted GMMs  
(with multiple components), and their optimal  
mapping (Section~\ref{sec:transport}), we analyze  
activation distributions to determine most probable  
component membership. This clustering interprets  
activation roles by comparing positions against  
learned patterns.  
Let $\rvz_{\text{current}} = \mathbf{z}_t^{(l,h)} \in \mathbb{R}^d$  
denote the output activation vector at step $t$  
from layer $l$, head $h$. Compute posterior  
probabilities for hallucation GMM components:  
\begin{equation}  
\mathcal{P}(r|\mathbf{z}) = \frac{w^r_0 \mathcal{N}(\rvz|  
\mu^r_0,\Sigma^r_0)}{\sum_{i=1}^{N_0} w^i_0 \mathcal{N}(\rvz|\mu^i_0,\Sigma^i_0)}.  
\end{equation}

Let 
\begin{equation}
\label{eq:belong}
r^* = \underset{r}{\arg\max}\ \mathcal{P}(r \mid \rvz_{\text{current}}) 
\end{equation}
denote the hallucinated Gaussian component to which the current activation 
$\rvz_{\text{current}}$ is most likely assigned, and let 
\begin{equation}
\label{eq:confidence}
c = \max_r \mathcal{P}(r \mid \rvz_{\text{current}})
\end{equation}
represent the corresponding assignment probability.

The intervention uses precomputed Schr\"{o}dinger  
bridge mappings (Section~\ref{sec:transport}). When  
identifying a hallucinated component via Eq.~(\ref{eq:belong}),  
we retrieve its trusted counterpart through $\lambda_{ij}^*$  
in Eq.~(\ref{eq:optimal_policy}) and compute the transfer  
vector as $\rvv_{r^*} = \boldsymbol{\mu}_1^{j^*}-\mu_0^{r^*}$,  
where $j^* = \argmax_j \lambda_{r^*j}$.  

Intervention strength combines base coefficient  
$\alpha_{\text{base}}$ and Eq.~(\ref{eq:confidence}) confidence:  
\begin{equation}
\label{eq:alpha}
\alpha_{\text{dynamic}} = \alpha_{\text{base}} \cdot c
\end{equation}  
Strong matches (c$\approx$1) get stronger interventions,  
while ambiguous cases receive milder adjustments  
to maintain stability. Apply the scaled vector:  
\begin{equation}
\label{eq:intervention}
\rvh^{(l+1)} = \rvh^{(l)} + \sum_{n=1}^{N_h}  
\left[\mathcal{A}_n^{l}(\rvh^{(l)})+ \mathbf{1}_{\text{top-}k}(l,n)  
\alpha_{\text{dynamic}}\rvv_{r^*} \right]\mP_n^{l}
\end{equation} 
Only top-$k$ heads (Figure~\ref{fig:framework}) get  
modified via indicator function $\mathbf{1}_{\text{top-}k}$,  
ensuring minimal perturbation while guiding activations  
toward trusted distributions. This correction applies  
at each generation step to reduce hallucinations  
while preserving generation coherence.

\begin{table*}[!h]\small %\footnotesize 
   \caption{ 
   Performance comparison of Scalpel and other baseline methods on the 
POPE benchmark across three datasets — MSCOCO, A-OKVQA, and GQA — using the 
\underline{LLaVA-1.5-7B} model. \textbf{Bold}  values indicate the best performance on each 
dataset and metric.
   }
 \label{tab:scalpel_llava}
  \centering
  \setlength{\tabcolsep}{4pt}
	\begin{tabular}{|l|ccc|ccc|ccc|}
      \hline
	\multirow{2}{*}{Acc.$\uparrow$/F1$\uparrow$} & \multicolumn{3}{c|}{MS COCO} 
	&  \multicolumn{3}{c|}{A-OKVQA} &  \multicolumn{3}{c|}{GQA} 
	 \\ 
	   \cline{2-10} 
			& Random	& Popular & Adversarial & Random & Popular & Adversarial
				  & Random  & Popular
				  & Adversarial  
				  \\ 
              \hline 
              Vanilla~\cite{liu2023visual}   & 83.29/81.33 & 81.88/80.06 & 78.96/77.57 
              &  83.45/82.56 & 79.90/79.59 & 74.04/75.15 
              & 83.73/82.95 &  78.17/78.37 & 75.08/76.06 \\ 
              VCD~\cite{leng2024mitigating} &  87.73/87.16 & 85.38/85.06 &  80.88/81.3 
              &  86.15/86.34 & 81.85/82.82 & 74.97/77.73 
              & 86.65/86.99 & 80.73/82.24 &  76.09/78.78 \\ 
              OPERA~\cite{huang2024opera}  &  89.20/88.81 & 86.64/86.62 & 81.24/81.38
               & 88.02/84.59 &  83.22/84.67 & 73.82/77.91
               &  88.13/88.91 & 79.27/82.11 &  75.00/78.71 \\ 
                  ICT~\cite{chen2024ict} & 89.1/88.48 & 86.76/86.40 & 83.83/83.84
                  &89.3/89.40  & 83.4/84.45 & 75.56/78.68
                  & 89.3/89.49 & 80.86/82.64  & 77.4/80.11\\ 
              Scalpel (ours) & \textbf{90.67}/\textbf{90.74} & 
              \textbf{87.87}/\textbf{88.36} & \textbf{85.97}/\textbf{86.00}
              & \textbf{89.87}/\textbf{89.93} & \textbf{85.00}/\textbf{85.18} & 
              \textbf{78.40}/\textbf{79.71}
              &  \textbf{89.93}/\textbf{89.87}  & \textbf{84.57}/\textbf{85.31}  
              &  \textbf{81.00}/\textbf{82.09} \\ 
  \hline
  \end{tabular}
\end{table*}

\section{Experiments}
\label{sec:experiments}

\subsection{Benchmarks and Experimental Setup}

\textbf{Polling-based Object Probing Evaluation (POPE).}
POPE~\cite{li2023evaluating} evaluates object 
hallucinations in LVLMs via binary queries 
(e.g., ``Is there a chair?''). Unlike caption-based 
methods, it directly probes object recognition 
and hallucination. The balanced dataset (27K pairs) 
contains 50\% real/50\% absent objects from 
COCO~\cite{lin2014microsoft}, A-OKVQA~\cite{schwenk2022okvqa}, 
and GQA~\cite{hudson2019gqa}. Three sampling strategies: 
random, popular (frequent objects), adversarial 
(challenging cases). Evaluation uses Accuracy and F1.

\begin{table*}[!h]\small %\footnotesize 
   \caption{ 
Performance comparison of Scalpel and the previous 
SOTA method ICT using the \underline{LLaVA-1.5-7B} model on the 
POPE benchmark across three datasets: MS COCO, A-OKVQA, and GQA. Results 
are shown for three configurations: image-level intervention (w/o obj), 
object-level intervention (w/o img), and their combination. 
The outputs marked with a blue background represent the results of our Scalpel.% approach. 
 }
 \label{tab:scalpel_ict_llava1_5_on_pope}
  \centering
  \setlength{\tabcolsep}{4pt}
	\begin{tabular}{|l|ccc|ccc|ccc|}
      \hline
	\multirow{2}{*}{Acc.$\uparrow$/F1$\uparrow$} & \multicolumn{3}{c|}{COCO} 
	&  \multicolumn{3}{c|}{A-OKVQA} &  \multicolumn{3}{c|}{GQA} 
	 \\ 
	   \cline{2-10} 
			& Random	& Popular & Adversarial & Random & Popular & Adversarial
				  & Random  & Popular
				  & Adversarial  
				  \\ 
            \hline 
                  Vanilla~\cite{liu2023visual}  & 83.29/81.33 & 81.88/80.06 & 78.96/77.57 
              &  83.45/82.56 & 79.90/79.59 & 74.04/75.15 
              & 83.73/82.95 &  78.17/78.37 & 75.08/76.06 \\
              \hline 
                  ICT w/o obj& 87.53/86.20 & 86.56/85.31 & 85.1/83.97
                  & 85.33/85.16	&  88.7/88.16	 & 78.7/79.81
                  & 89.06/88.53	 &  84.73/84.69	 & 81.43/81.97 \\ 
           \rowcolor{blue!20}   Scalpel w/o obj & 89.07/88.18 &  87.07/86.24 & 85.77/85.29
              & 89.53/89.36		& 85.50/85.29 &  79.30/79.90
              & 89.67/89.40	 & 84.63/84.70	 & 81.47/82.08 \\ 
              \hline
   ICT w/o img & 89.4/88.85 &  86.83/86.49 &  83.7/83.82 
    & 83.23/84.34 &  89.13/89.26  & 75.6/78.72
                  & 89.2/89.40  & 84.06/83.44 & 74.3/74.91 \\ 
          \rowcolor{blue!20}    Scalpel w/o img & 89.90/90.08 & 87.6/86.97& 85.53/84.66 
              & 89.47/89.27& 85.27/85.22 & 78.47/80.18
              & 89.87/89.81 & 84.47/85.04 & 80.93/81.89\\ 
  \hline
   ICT~\cite{chen2024ict} & 89.1/88.48 & 86.76/86.40 & 83.83/83.84
                  &89.3/89.40  & 83.4/84.45 & 75.56/78.68
                  & 89.3/89.49 & 80.86/82.64  & 77.4/80.11\\ 
    \rowcolor{blue!20}          Scalpel (ours) & 90.67/90.74 & 87.87/88.36 & 85.97/86.00
              & 89.87/89.93 & 85.00/85.18 & 78.40/79.71
              &  89.93/89.87  & 84.57/85.31  &  81.00/82.09 \\ 
  \hline
  \end{tabular}
\end{table*}

\textbf{Multimodal Model Evaluation (MME).} MME~\cite{fu2023mme} comprehensively assesses LVLMs 
across 14 subtasks: 10 perception, 4 cognition. 
Perception includes object existence/count 
(object hallucinations), position/color 
(attribute hallucinations). Cognition covers 
commonsense reasoning (CSR), numerical, translation, 
and code reasoning. Accuracy-based metrics used.

\begin{figure}[th]%[htp]
  %\vspace{-0.4in}
  \centering
  %\begin{center}
  \hspace{-5mm}
  \includegraphics[width=1.0\linewidth]{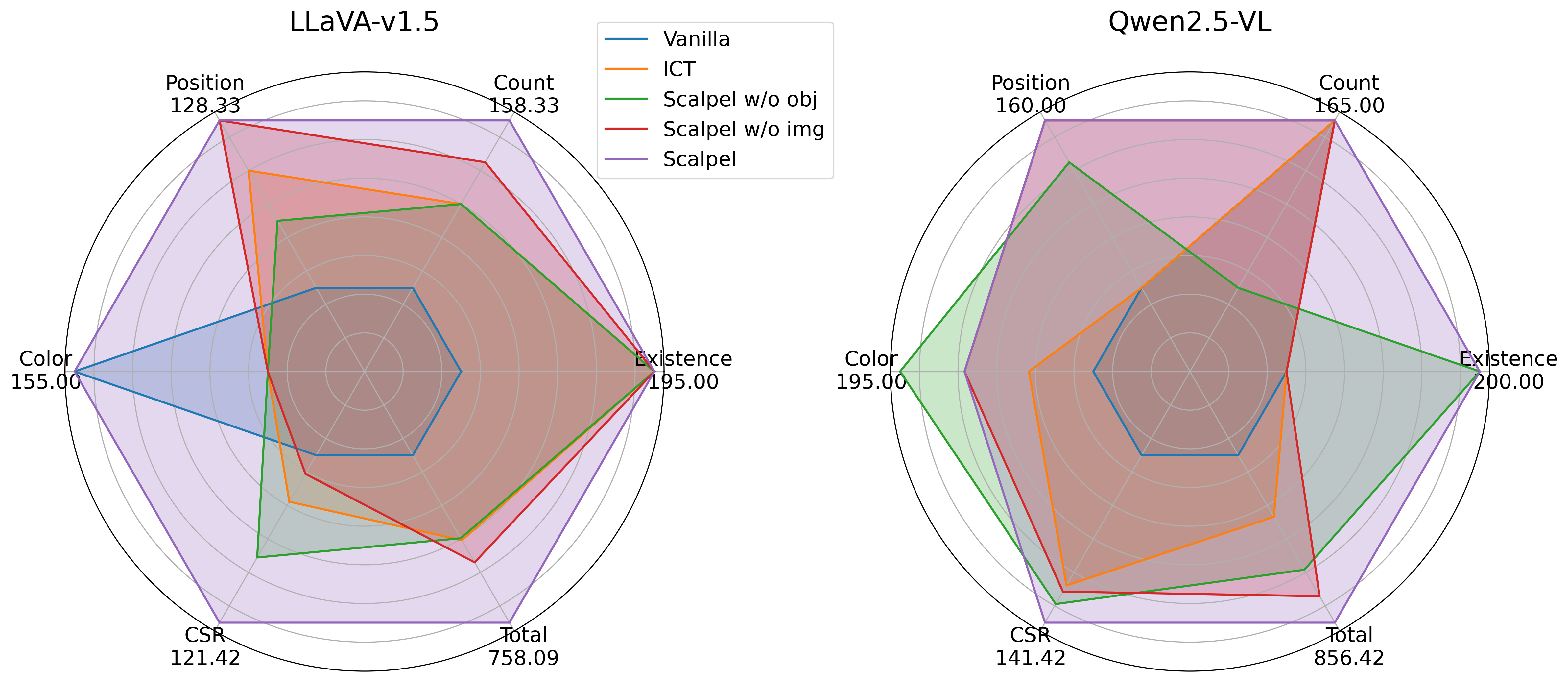}
  \hspace{-5mm}
  %\end{center}
  %\vspace{-2.5mm}
  \caption{
On the MME benchmark, Scalpel outperforms prior
SOTA methods - ICT, Vanilla LLaVA-1.5 and Qwen2.5-VL. The radar
chart highlights improvements across key categories:
existence, location, counting, color perception,
common sense reasoning, and overall performance.
  }
  \label{fig:mme_results}
  %\vspace{-2.5mm}
  \end{figure}
  %\vspace{-0.5cm}

  \textbf{Implementation details.}  
We evaluated Scalpel on two established LVLMs:  
LLaVA-1.5-7B~\cite{liu2023visual} and Qwen2.5-VL-7B~\cite{bai2025qwen2},  
comparing with VCD~\cite{liu2023visual}, OPERA, and ICT\footnotemark[1] hallucination  
mitigation methods. Hyperparameters included: top-$k$  
head selection (Eq.~(\ref{eq:intervention})), intervention  
strength $\alpha_{\text{base}}$ (Eq.~(\ref{eq:alpha})), and GMM component  
count $N_0$ (Eq.~(\ref{eq:gmm0})). To simplify tuning while preserving  
discriminative power, we enforced equal component counts  
for trusted/hallucinated GMMs.  
For head selection, we adopted ICT's framework~\cite{chen2024ict}  
with 1,500 trusted vs. hallucination samples (image/object-level),  
training logistic regression classifiers on attention head  
activations to identify top-$k$ critical heads. This prioritized  
heads showing significant activation patterns during hallucinatory phases.  

\footnotetext[1]{All ICT results are from our re-implementation  
(based on \url{https://github.com/THU-BPM/ICT}) for fair comparison.}  

\subsection{Results and Discussion}

\begin{table*}[!h]\small %\footnotesize 
   \caption{ 
   Performance comparison of Scalpel and the previous 
SOTA method ICT using the \underline{Qwen2.5-VL-7B} model on the 
POPE benchmark across three datasets: MS COCO, A-OKVQA, and GQA. Results 
are shown for three configurations: image-level intervention (w/o obj), 
object-level intervention (w/o img), and their combination. 
The results highlighted with a blue background were produced by our Scalpel.
 }
 \label{tab:scalpel_ict_qwen2_5_vl_on_pope}
  \centering
  \setlength{\tabcolsep}{4pt}
	\begin{tabular}{|l|ccc|ccc|ccc|}
      \hline
	\multirow{2}{*}{Acc.$\uparrow$/F1$\uparrow$} & \multicolumn{3}{c|}{COCO} 
	&  \multicolumn{3}{c|}{A-OKVQA} &  \multicolumn{3}{c|}{GQA} 
	 \\ 
	   \cline{2-10} 
			& Random	& Popular & Adversarial & Random & Popular & Adversarial
				  & Random  & Popular
				  & Adversarial  
				  \\ 
          \hline 
                  Vanilla~\cite{bai2025qwen2} & 85.4/82.97 & 85.13/82.71 & 84.83/82.42
                  &  87.76/86.41 & 86.43/85.15  & 81.5/80.78
                  &  87.1/85.63 & 84/82.74 & 81.6/80.65 \\ 
              \hline 
                  ICT w/o obj& 86.4/84.31 & 86.3/84.27 & 85.7/83.53
                  &  89.93/89.15 & 87.46/86.76 & 82.5/82.54
                  &  88.06/87.01 & 83.9/83.23 & 81.63/81.17\\ 
     \rowcolor{blue!20}         Scalpel w/o obj & 86.27/84.84 &  88.97/87.87 & 86.57/84.87
              & 91.20/90.54 & 88.83/88.37 & 81.70/80.94
              & 88.03/89.67 & 84.43/84.16 & 85.47/85.96 \\ 
              \hline
   ICT w/o img & 85.4/82.97 & 85.4/83.07 & 85.2/82.90
                  & 88.76/87.64 & 86.63/85.50 & 81.7/81.16
                  & 87.16/85.66 & 83.93/82.67  & 81.33/80.40 \\ 
   \rowcolor{blue!20}           Scalpel w/o img & 86.37/84.26 & 85.80/83.60 & 85.37/82.98
              & 88.73/87.66 & 87.17/85.99 & 82.57/82.08
              & 87.73/86.02 & 85.27/85.89 & 83.00/82.29 \\ 
  \hline
   ICT~\cite{chen2024ict} &  87.53/85.84  & 86.76/84.94 & 86.16/84.32
                  & 88.96/87.90 & 87.43/86.39  & 83.6/83.02
                  &  88.96/87.89 &  86.43/85.47 & 84.1/83.53\\ 
      \rowcolor{blue!20}        Scalpel (ours) &  91.17/90.41 & 90.80/90.33 & 88.83/88.49
              & 91.00/90.55 & 90.50/90.41 &  84.97/85.82
              & 89.20/89.84 & 87.20/86.68 &  85.90/85.40 \\ 
  \hline
  \end{tabular}
\end{table*}

Tables~\ref{tab:scalpel_llava},~\ref{tab:scalpel_ict_llava1_5_on_pope},  
and~\ref{tab:scalpel_ict_qwen2_5_vl_on_pope} compare Scalpel with  
leading methods—VCD, OPERA, and ICT—across nine POPE datasets under  
LLaVA-1.5 and Qwen2.5-VL frameworks. Scalpel excels on MS COCO, A-OKVQA,  
and GQA, outperforming prior methods in Random, Popular, and Adversarial  
subsets. Compared to Vanilla, it improves accuracy by 7.61\% and F1 by  
8.88\%, with notable gains in adversarial cases (e.g., +10.87\% F1 on  
MS COCO-Adversarial), demonstrating strong robustness and generalization.  
Against ICT—the previous best—it achieves average relative improvements  
of 2.43\% (Acc.) and 1.81\% (F1), with no performance drop. Largest gains  
appear in Popular and Adversarial subsets (e.g., +4.59\% Acc. on  
GQA-Popular), highlighting its ability to reduce bias and handle hard data.

Table~\ref{tab:scalpel_ict_llava1_5_on_pope} presents detailed analysis  
of Scalpel's modular intervention vs. ICT under LLaVA. Image-Level  
(w/o obj) gains 1.63\% avg F1 (+4.93\% peak on A-OKVQA-Random),  
with strong adversarial performance (+2.47\% Acc. on COCO-Adversarial).  
Object-Level (w/o img) improves 1.85\% avg F1 (+5.85\% peak) while avoiding  
ICT's 84.06 → 84.47 Acc. drop on GQA-Popular. Combined, Scalpel achieves  
max gains (+2.43\% Acc./1.81\% F1) with super-additive effects: joint modules  
deliver +8.22\% F1 on COCO-Adversarial over ICT's single module. Crucially,  
all 18 comparative indicators (9 subsets $\times$ Acc/F1) show strict dominance,  
validating Scalpel's hierarchical design through both superior individual  
modules and synergistic collaboration.

\begin{figure}[th]%[htp]
  %\vspace{-0.4in}
  \centering
  %\begin{center}
  \hspace{-5mm}
  \includegraphics[width=1.0\linewidth]{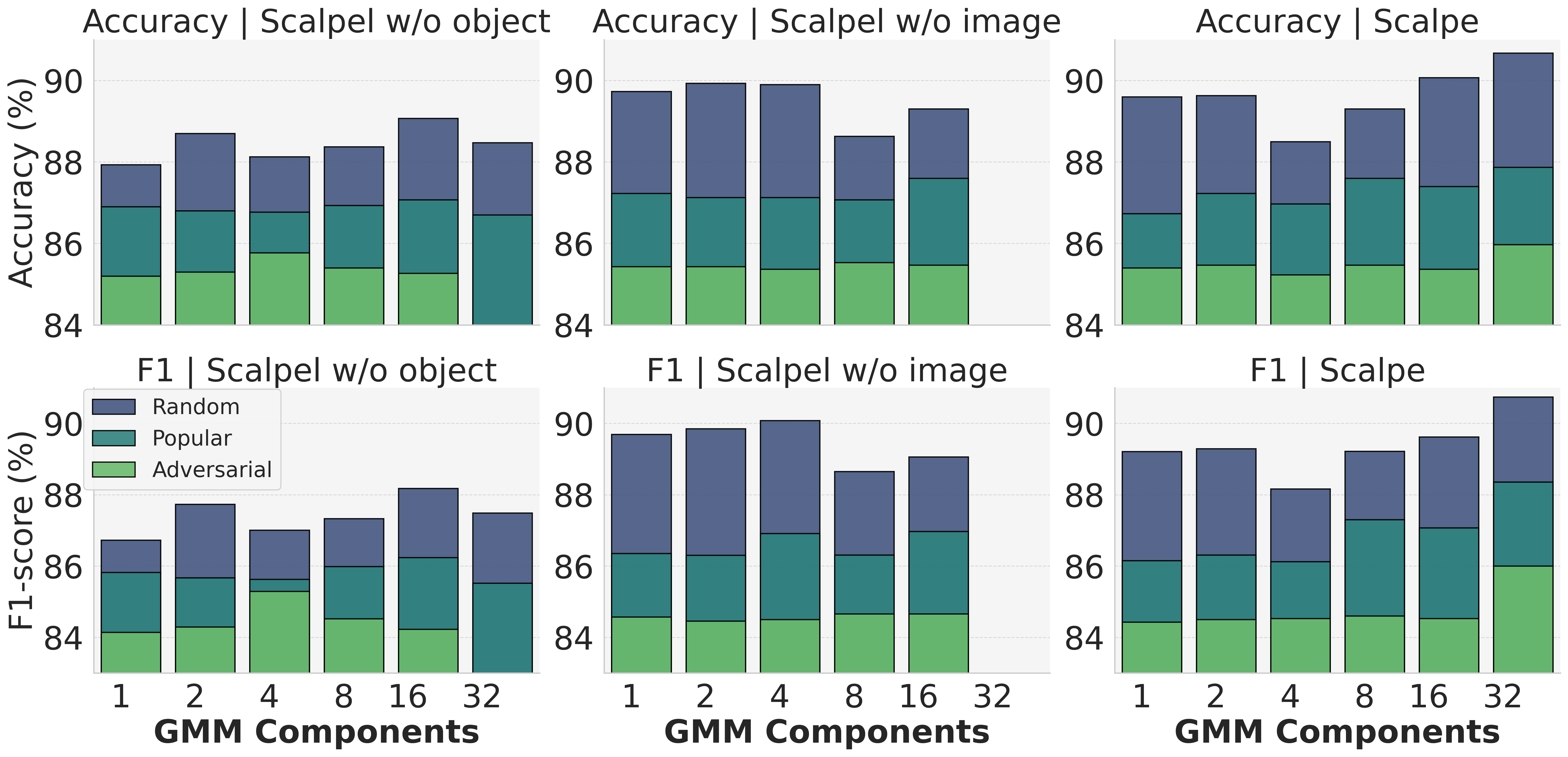}
  \hspace{-5mm}
  %\end{center}
  %\vspace{-2.5mm}
  \caption{
    Ablation study of Scalpel  under different GMM component settings, 
evaluated on the POPE benchmark (MS COCO dataset)
with LLaVA-1.5-7B.
  }
  \label{fig:coco_llava_ablation_components}
  %\vspace{-2.5mm}
  \end{figure}
  %\vspace{-0.5cm}

Table~\ref{tab:scalpel_ict_qwen2_5_vl_on_pope} confirms Scalpel's advantages in  
Qwen2.5-VL with +5.19\% average Acc. (peak +9.94\% on COCO-Random) and  
+7.16\% F1 improvement (peak +9.20\% COCO-Popular), achieving 85.40 F1  
on GQA-Adversarial (vs. 80.65). Modular analysis shows Image-Level (w/o obj)  
+3.07\% avg F1 (A-OKVQA-Random 90.54 vs. 89.15) and Object-Level (w/o img)  
+1.92\% avg F1 (GQA-Popular 85.89 vs. 82.67). Dual-module synergy delivers  
1.67× F1 gain over ICT (Scalpel 2.97\% vs. ICT 1.78\%), maintaining 1.77\%  
absolute advantage on GQA-Adversarial. Adversarial F1 gains exceed ICT by  
1.65× (GQA: 5.89\% vs. 3.57\%).  

Figure~\ref{fig:mme_results} shows multi-dimensional improvements: LLaVA-  
based Scalpel scores 758.09 (+8.52\% vs. Vanilla), excelling in counting  
(+14.46\%), positioning (+14.93\%), and common sense (+11.84\%). Removing  
modules reduces performance (w/o obj: 728.09/+4.23\%; w/o img: 736.66/+5.45\%),  
highlighting image-object synergy. Qwen-based Scalpel scores 851.42 (+6.14\%),  
achieving SOTA in positioning (+14.29\%), color (+5.56\%), and common sense  
(+15.80\%). Module analysis reveals: image removal maintains positional/color  
performance but degrades reasoning (+12.86\% vs. +15.80\%), while object removal  
causes 9.09\% counting drop but improves color (+8.33\%) and reasoning (+14.03\%)  
via reduced interference. Scalpel surpasses ICT (822.14/+2.49\%), with absolute  
dominance in complex reasoning (141.42 vs. 137.85).

\begin{figure*}[th]
\begin{center}
   \includegraphics[width=1.0\linewidth]{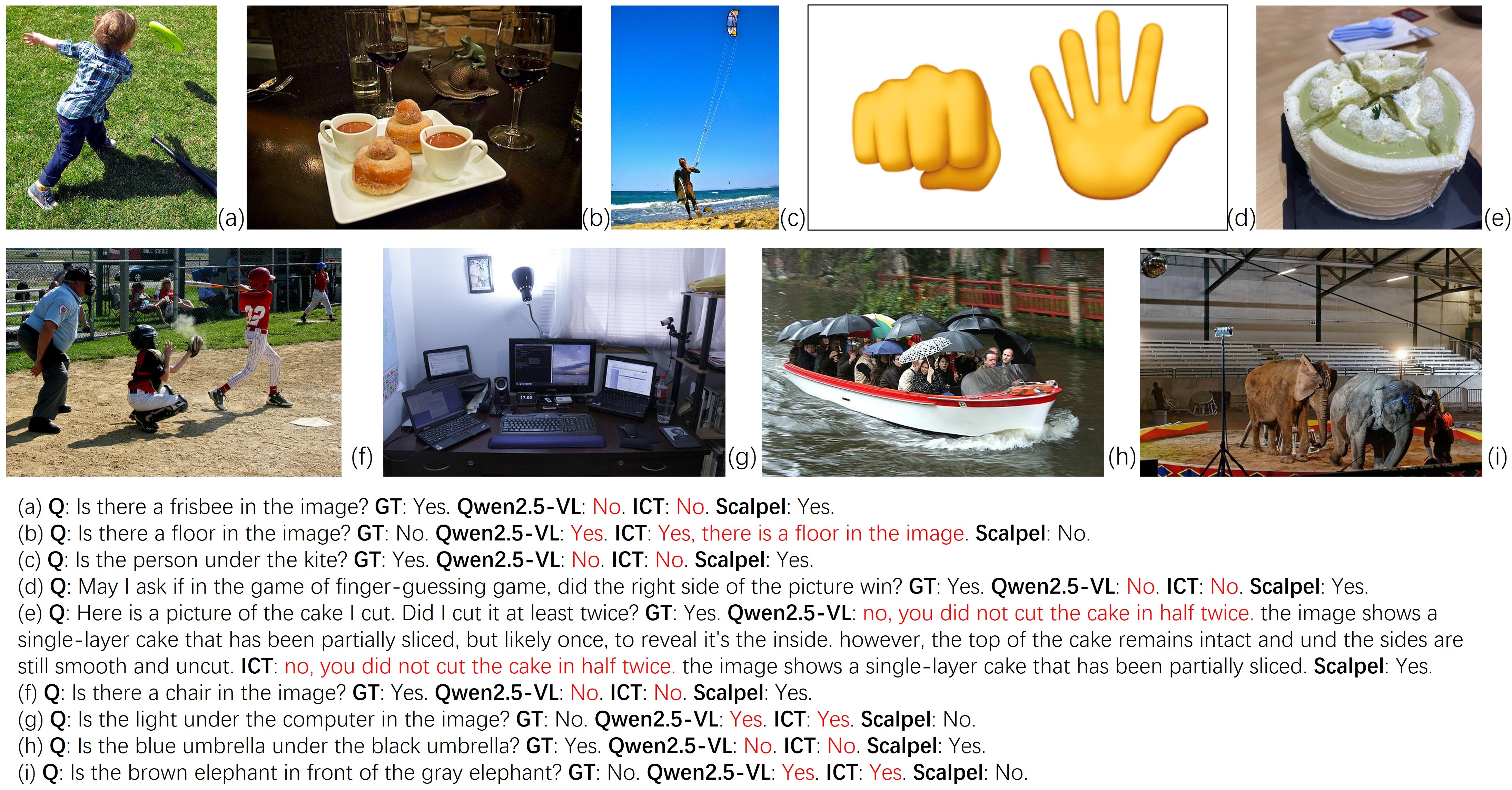}
\end{center}
   \caption{
A comparative analysis of Scalpel, ICT, and the original 
Qwen2.5-VL-7B across selected test cases is presented. 
\textbf{Q} denotes the question, \textbf{GT} represents the ground truth, 
and each method's 
answer follows its name in \textbf{bold}. 
Extended \textcolor{red}{hallucinated responses} are highlighted in  \textcolor{red}{red} for emphasis.
Notably, 
the original questions included the instruction ``Please answer yes or no'' immediately 
after the question mark, which has been omitted to optimize space usage.
   }
\label{fig:case_studu}
\end{figure*}

\subsection{Ablation study}

Scalpel's efficacy was validated through three aspects:  
GMM components (intervention granularity), image-level, and  
object-level interventions. Tables~\ref{tab:scalpel_ict_llava1_5_on_pope},  
~\ref{tab:scalpel_ict_qwen2_5_vl_on_pope}, and  
Figure~\ref{fig:coco_llava_ablation_components} present ablation  
studies. Specifically, Tables~\ref{tab:scalpel_ict_llava1_5_on_pope} and  
~\ref{tab:scalpel_ict_qwen2_5_vl_on_pope} compare Scalpel variants  
(LLaVA-1.5/Qwen2.5-VL) across 9 POPE subsets when removing modules.  
Quantitative results show: adding image-level intervention improves  
LLaVA by +7.46\% Acc./+8.43\% F1 (Qwen: +2.32\%/+3.69\%); object-level  
adds +7.37\% Acc./+8.14\% F1 (LLaVA) and +1.09\%/+1.52\% (Qwen).  
Combined modules show peak gains: +7.61\% Acc./+8.78\% F1 (LLaVA)  
and +4.70\% Acc./+6.48\% F1 (Qwen). These cross-model improvements  
validate dual levels' critical role in hallucination mitigation.

Figure~\ref{fig:coco_llava_ablation_components} analyzes GMM component count's  
impact on hallucination reduction and accuracy. Tests on  
COCO's POPE subsets (Random, Popular, Adversarial) evaluated  
1–32 components. Higher counts improve performance, with  
32 components yielding best results across all subsets.  
Image-level intervention peaks at 16 components for  
Random/Popular, fewer for Adversarial. Object-level  
requires higher counts even in adversarial cases.  
Complementary patterns justify recommending full Scalpel  
with 32 GMM components, balancing granularity needs  
across intervention levels and data complexities.

Figure~\ref{fig:case_studu} presents qualitative comparisons.  
Scalpel surpasses ICT in object perception, spatial analysis, scenario suitability,  
and action trace recognition. First, compared to ICT/Vanilla Qwen2.5-VL, Scalpel enhances image understanding accuracy by 
removing hallucinated elements (e.g., non-visable floors) and detecting missed 
objects (e.g., frisbees/chairs), significantly reducing hallucinations. Second, Scalpel perfectly matches  
GT in object positioning (e.g., occlusion under kites/umbrellas)  
and spatial arrangements (elephant positioning), while ICT  
errors arise from over-reliance on localized features/color cues.   
Third, Scalpel detects subtle visual traces (e.g., cake cuts)  
rather than surface appearances. It also eliminates ICT's  
redundant assumptions/logical inconsistencies (e.g., ``requires  
verificatio''), producing streamlined reasoning paths. These  
advantages reflect superior vision-language alignment and noise  
filtering during complex reasoning.

\section{Conclusion}

To correct hallucinated attention activations in LVLM
Transformers without added computational cost, we
propose Scalpel. This method models hallucinated and
trusted attention activations via Gaussian Mixture
Models (GMMs), yielding hallucinated and trusted GMMs
respectively. The Schr"odinger Bridge (equivalent to
entropic optimal transport) then constructs the minimal
transport-cost correction scheme by treating these
GMMs as marginal distributions. This preserves LVLMs'
data-driven learning capabilities while effectively
suppressing hallucinations. Experiments across datasets
show Scalpel achieves SOTA performance, surpassing
non-customized methods. A promising future direction
is investigating refined correction formulations via
the infinite-component limit of GMMs.

{
    \small
    \bibliographystyle{ieeenat_fullname}
    \bibliography{hallu}
}

\end{document}